\crefname{section}{Sec.}{Secs.}
\Crefname{section}{Section}{Sections}
\Crefname{table}{Table}{Tables}
\crefname{table}{Tab.}{Tabs.}
\newcommand{\fig}[1]{\cref{fig:#1}}
\newcommand{\topic}[1]{\vspace{1mm}\noindent\textbf{#1}}
\newcommand{\FLIP}{\protect\reflectbox{F}LIP\xspace}
\newcommand{\modelname}{StereoLayers\xspace}
\newcommand{\baselinestereomag}{StereoMag\xspace}
\newtheorem{theorem}{Theorem}[section]
\newtheorem{lemma}[theorem]{Lemma}
\newtheorem*{lemma*}{Lemma}
\newif\ifArxivVersion
\begin{document}

\title{Stereo Magnification with Multi-Layer Images}

\author{
    T. Khakhulin$^{1,2}$
    \; D. Korzhenkov$^{1}$
    \; P. Solovev$^{1}$
    \; G. Sterkin$^{1}$
    \; A.-T. Ardelean$^{1,2}$
    \; V. Lempitsky$^{2}$\thanks{Most of the work was done while Victor Lempitsky was at Samsung AI Center}
    \\[7pt] $^1$Samsung AI Center -- Moscow
    \\ $^2$Skolkovo Institute of Science and Technology, Moscow
    \\[7pt]  \url{https://samsunglabs.github.io/StereoLayers/}
}

\maketitle

\begin{abstract}
Representing scenes with multiple semitransparent colored layers has been a popular and successful choice for real-time novel view synthesis. 
Existing approaches infer colors and transparency values over regularly spaced layers of planar or spherical shape. In this work, we introduce a new view synthesis approach based on multiple semitransparent layers with scene-adapted geometry. Our approach infers such representations from stereo pairs in two stages. The first stage produces the geometry of a small number of data-adaptive layers from a given pair of views. The second stage infers the color and transparency values for these layers, producing the final representation for novel view synthesis. Importantly, both stages are connected through a differentiable renderer and are trained end-to-end. In the experiments, we demonstrate the advantage of the proposed approach over the use of regularly spaced layers without adaptation to scene geometry. Despite being orders of magnitude faster during rendering, our approach also outperforms the recently proposed IBRNet system based on implicit geometry representation.
\end{abstract}

\section{Introduction}

\begin{figure*}[t]
    \centering
    \includegraphics[width=\textwidth]{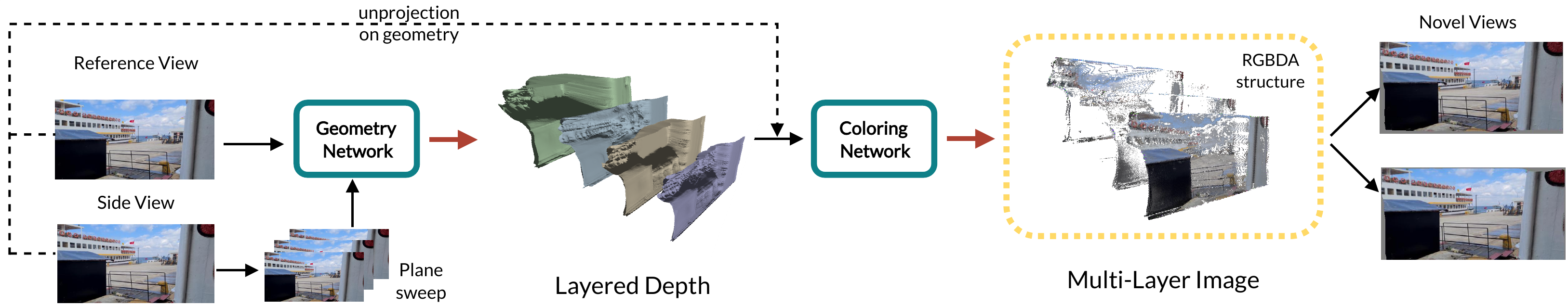}
    \caption{The proposed \modelname pipeline estimates scene-adjusted multi-layer geometry from the plane sweep volume using a pretrained geometry network, and after that estimates the color and transparency values using a pretrained coloring network. The layered geometry represents the scene as an ordered set of mesh layers. The geometry and the coloring networks are trained together end-to-end. 
    }
    \label{fig:scheme}
    \vspace{-12pt}
\end{figure*}

Recent years have seen rapid progress in image-based rendering and novel view synthesis, with a multitude of various methods based on neural rendering approaches~\cite{neuralsurvey}. Among this diversity, the approaches that are based on semitransparent multi-layer representations \cite{szeliski99,stereomag,pushing_bound,llff,singleview_mpi} stand out due to their combination of fast rendering time, compatibility with traditional graphics engines, and good quality of re-rendering in the vicinity of the input frames.

Existing approaches \cite{szeliski99,stereomag,pushing_bound,llff,singleview_mpi,immersive_lf_video,multidepth_panorama} build multi-layer representations over grids of regularly spaced surfaces such as planes or spheres with uniformly changing inverse depth. 
As the number of layers is necessarily limited by resource constraints and the risk of overfitting, this number is usually taken to be relatively small (\eg 32). 
The resulting semi-transparent representation may therefore only coarsely approximate the true geometry of the scene, which limits the generalization to novel views and introduces artefacts. The most recent works~\cite{multidepth_panorama,immersive_lf_video} use excessive number of spheres (up to 128) and then merge the resulting geometry using a non-learned post-processing merging step.
While the merge step creates scene-adapted and compact geometric representation, it is not incorporated into the learning process of the main matching network, and degrades the quality of novel view synthesis~\cite{immersive_lf_video}. 

The coarseness of layered geometry used by multi-layer approaches is in contrast to more traditional image-based rendering methods that start by estimating the \textit{non-discretized} scene geometry in the form of mesh~\cite{deferred_rendering,free_synthesis}, view-dependent meshes~\cite{deep_blending}, a single-layer depth map~\cite{synsin,ken_burns,3d_layered_inpainting}. Geometry estimates may come from multiview dense stereo matching or from monocular depth. All these approaches obtain a finer approximation to scene geometry, although most of them have to use a relatively slow neural rendering step to compensate for the errors in the geometry estimation.

Our approach called \emph{\modelname} (\fig{scheme}) combines scene geometry adaptation with multi-layer representation. This model is designed for a case known as \emph{stereo magnification} problem: it reconstructs the scene from as few as two input images. The proposed method starts by building a geometric proxy that is customized to a particular scene. The proxy is formed by a small number of mesh layers with \textit{continuous} depth coordinate values. In the second stage, similarly to other multi-layer approaches, we estimate the transparency and color textures for each layer, resulting in the final representation of the scene. When processing a new scene, both stages take the same pair of images of that scene as input. Two deep neural networks trained on a dataset of similar scenes are used to implement these two stages. Crucially, we train both neural networks together in an end-to-end fashion using the differentiable rendering framework~\cite{Laine2020diffrast}.

We compare our approach to the previously proposed methods that use regularly spaced layers on the popular RealEstate10k~\cite{stereomag} and LLFF~\cite{llff} datasets. In addition, we propose a more challenging new dataset for novel view synthesis benchmarking. In both cases, we observe that scene-adaptive geometry in our approach results in better novel view synthesis quality than the use of non-adaptive geometry. To put our work in a broader context, we also compare our system's performance with the IBRNet system~\cite{ibrnet}, and observe the advantage of our approach, in addition to the considerably faster rendering time. In general, our approach produces very compact scene representations that are amenable for real-time rendering even on low-end devices.

To sum up, our contributions are as follows.
First, we propose a new method for the geometric reconstruction of a scene from pairs of stereo. The method represents scenes using a small number of semitransparent layers with scene-adapted geometry. Unlike other related methods, ours uses two jointly (end-to-end) trained deep networks, the first of which estimates the geometry of the layers, while the second estimates the transparency and color textures of the layers. Finally, we evaluate our approach on previously proposed datasets and introduce a new challenging dataset for training and evaluation of novel view synthesis methods.

\section{Related works}

\begin{figure*}[ht]
    \centering
    \includegraphics[width=\textwidth]{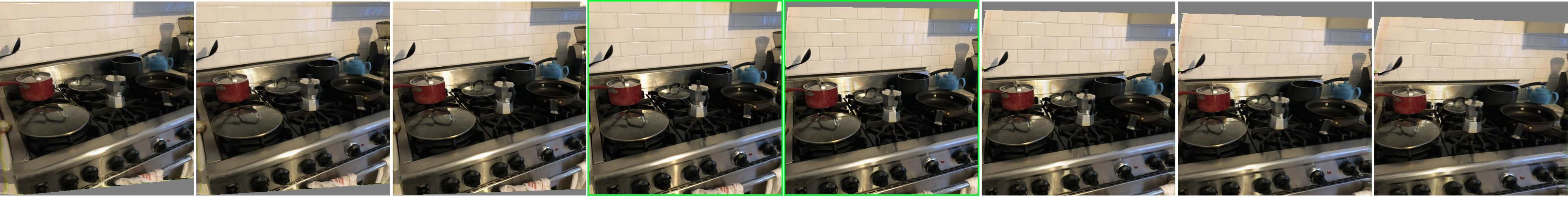}\\
    \includegraphics[width=\textwidth]{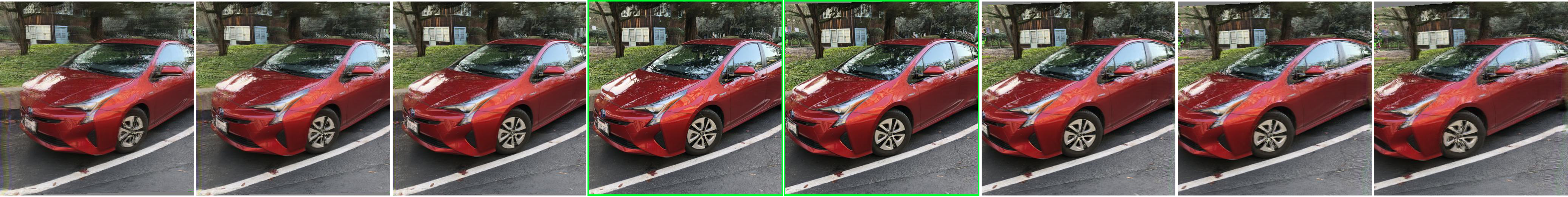}
    \caption{View extrapolations obtained by our method. The two input images are shown in the middle. The proposed method (StereoLayers) generate plausible renderings even when the baseline is magnified by a factor of 5x (as in this case). 
    }
    \label{fig:magnification}
    \vspace{-10pt}
\end{figure*}

\topic{Representations for novel view synthesis.} Over the years, different kinds of representations have been proposed for novel view synthesis. Almost without exception, when such representations are acquired from multiple images, those are registered using structure-and-motion algorithm or come from a pre-calibrated stereo-rig. Alternatively, some recent works investigate the creation of such representations from a single image~\cite{lsi,synsin}. 
The proposed representations fall into several classes, including volumetric representations that rely on volumetric rendering~\cite{lumigraph,deepvoxels,neuralvolumes,nerf}, mesh-based representations~\cite{debevec96,zitnick04,deferred_rendering,insideout,deep_blending,free_synthesis} and point-based representations~\cite{pointbasedsurvey,npbg}. Most representations of these types require extensive computations to render a novel view, such as running a raw image through a deep convolutional rendering network~\cite{neuralsurvey} or numerous evaluations of a scene network that has a perceptron architecture~\cite{nerf,nsvf}.

An important class of representations is based on depth maps. Such depth maps can be naturally obtained using stereo matching~\cite{chen_williams} or from monocular depth estimation~\cite{synsin,3d_layered_inpainting}. In this class, the 3D layered inpainting approach~\cite{3d_layered_inpainting} is most related to our work, since after starting from a monocular depth map, it performs its segmentation into multiple layers and then applies the inpainting procedure to each layer to extend its support behind the more frontal layers. Our work has several important differences, as it uses two (rather than one) images as input and predicts the transparency of the layers. Most importantly, the estimation of the multi-layered geometry and the estimation of their colors and transparency are both implemented using deep architectures, which are trained in an end-to-end fashion.

\topic{Multi-layer semitransparent representations.} In 1999, \cite{szeliski99} proposed representing scenes with multiple fronto-parallel semitransparent layers and acquiring such representations through stereo-matching of a pair of input views. Twenty years later, several approaches~\cite{pushing_bound, llff, deep_view} starting from \cite{stereomag} exploited advances in deep learning to build deep networks that directly map \textit{plane sweep volumes} (\ie tensors obtained by the ``unprojection'' operation) to final representations of the same kind. The rendering of semitransparent layers is well supported by modern graphics engines, thus the resulting representation is in general more suitable to interactive applications than most other representations that lead to the similar level of realism.

The multi-layer representations have been extended to wider fields of view in \cite{immersive_lf_video,matryoshka,multidepth_panorama} by replacing planes with spheres. Two approaches~\cite{immersive_lf_video,multidepth_panorama} suggested to ``coalesce'' (merge) the groups of nearby layers into layers with scene-adapted geometry. In both cases, the grouping of layers is predefined and the merge process is non-learnable and uses simple accumulation heuristics. Consequently, \cite{immersive_lf_video} reported the loss of rendering quality as a result of such merge, which is still justified in their case by increased rendering and storage efficiency. 

Our research is highly related to previous works on multi-layer semitransparent representations. Unlike most works in this group, our pipeline starts with scene-adapted (non-planar, non-spherical) layer estimation and only then estimates the colors and transparencies of the layers. While \cite{immersive_lf_video,multidepth_panorama} also end up with scene-adapted semi-transparent layers as a representation, our approach performs the reconstruction in the opposite order (the geometry is estimated first). More importantly, unlike~\cite{immersive_lf_video,multidepth_panorama} we estimate the geometry of layers using a neural network, which is trained jointly with the color and transparency estimation network. In the experiments, we show that such an approach results in better view synthesis. 

\topic{Single-layer new view synthesis with differentiable rendering.} SynSin~\cite{synsin} and, more recently, Worldsheet~\cite{worldsheet} systems predict single-layered geometry from a single image and use differentiable rendering to learn the neural network in a way similar to our method. Our approach considers the case of two input images and focuses on multi-layer geometry. While a variant of Worldsheet considers two-layer extension, it is based on a different architecture and a different layer aggregation strategy and, most importantly, does not outperform a single-layer representation in their experiments.

\section{Multi-layer representation from stereo}
\label{sec:method}
\begin{figure*}[ht]
\centering
    \includegraphics[width=\linewidth]{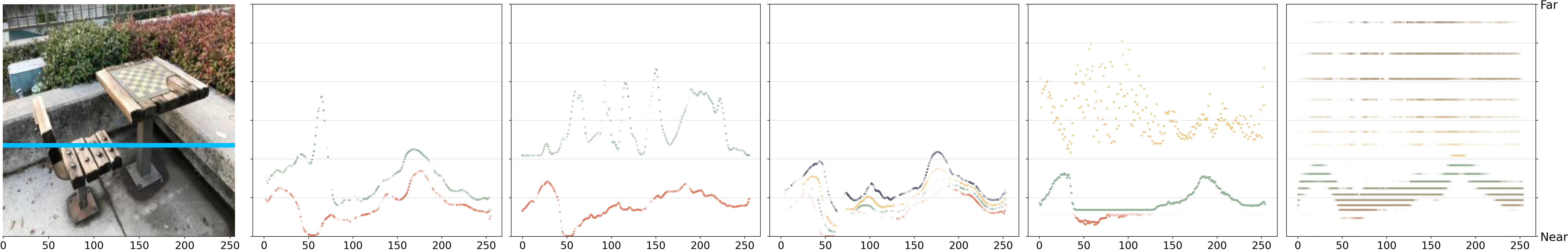}\\
    \includegraphics[width=\linewidth]{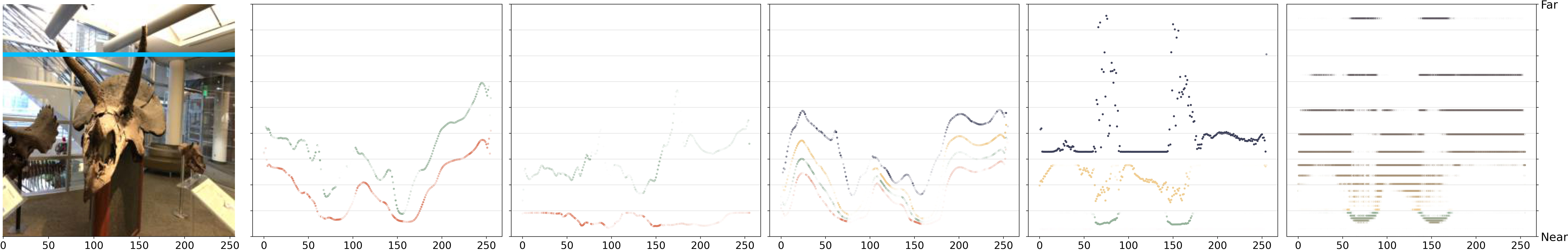}
\\ \makeatletter\@for\items:={Reference view,\modelname-2 (BI+RSBg),\modelname-2 (GC+RSBg),\modelname-4 (BI+RSBg),  \hspace{4pt}StereoMag-P4,  \hspace{10pt}StereoMag-32}\do{\minipage{0.16\textwidth}\centering\small \items \endminipage\hfill}\makeatother
    \vspace{-5pt}
    \caption{ 
        For the two stereo pairs (only reference views are shown), we visualize horizontal slices along the blue line. Mesh vertices are shown as dots with the predicted opacity. Colors encode the layer number. The horizontal axis corresponds to the pixel coordinate, while the vertical axis stands for the vertex depth \wrt the reference camera (only the most illustrative depth range is shown). StereoLayers method variants generate scene-adaptive geometry in a more efficient way than StereoMag~\cite{stereomag} resulting in more frugal geometry representation, while also obtaining better rendering quality.
    }
    \label{fig:layers_slices}
    \vspace{-10pt}
\end{figure*}

We consider the task of stereo magnification \ie generate a novel view $\hat{I}_n$ of the scene, based on two input views (images): a reference view $I_r$ and a side view $I_s$. We assume that the relative camera poses $\pi_s$ and $\pi_n$ of the side and novel views to the reference view and the camera intrinsics $K_r$, $K_s$, and $K_n$ are given.
To solve this task, our approach builds the scene representation that depends only on side and reference views. Afterward, such a representation can be rendered on any novel camera with standard graphic engines (without reestimating the scene representation). We now describe our approach in detail. We first explain the rendering procedure of a trained model and then discuss the training process.

\subsection{Geometry estimation}
Given a trained model and a new stereo pair, the multi-layer representation is inferred in two stages. First, the structure of the scene, such as the geometry of the mesh layers, is predicted. Then, in the second stage, the layers' opacity (alpha) and RGB color (textures) are inferred.
Note that we treat the pair of input views asymmetrically, as we build the scene representation in the frustum of the reference camera.

We start by computing the plane sweep volume (PSV)~\cite{Collins96} by placing $P$ fronto-parallel planes in the reference camera frustum and unprojecting the side view onto these planes. The planes are spaced uniformly in the inverse depth space at depths $\{ d_1, \dots, d_P \}$. We sample the planes at $H\times{}W$ resolution and concatenate the reference view as an additional set of three channels, resulting in $H\times{}W\times{}(3P+3)$-sized tensor, which is similar to the one used in other multi-layer approaches, \eg \cite{stereomag}.

The input tensor is then processed by the \emph{geometry network} $F_g$. Although we consider several variants of the architectures discussed in the following, all these architectures predict $L$ depth maps of size $h \times w$, which correspond to the depths along with the $h\times{}w$ pencil of rays uniformly spaced in the image coordinate space of the reference view.
In our experiments, we set the resolution of the layers equal to the size of the reference view, $w = W$, although sampling at different resolutions is also possible.
The backbone of $F_g$ is similar to the depth prediction module of SynSin~\cite{synsin}, \ie is a UNet-like 2D-convolutional net with spectral normalization.
The only difference is that we increased the number of input and output featuremaps to address the multi-layer nature of our model. More detailed description of the backbone is provided in Supplementary (\ifArxivVersion\cref{sec:architectures}\else Sec.~S1\fi). We consider the following three schemes to encode the layers.

\topic{Group compositing (GC) scheme.}  
In this scheme, $F_g$ returns the tensor of shape $h \times w \times P$ with values in the range between 0 and 1. The $P$ channels and the corresponding $P$ planes of PSV are divided into $L$ groups of equal size. Then $L$ deformable layers are obtained as follows:
within each group $j$, $1 \leq j \leq L$, the depth value $\hat{d}_j$ is computed by over-composing~\cite{compose_over} the planes' depths $d_1 < \ldots < d_P$ with `opacities' $\{\beta_k\}_{k=1}^P$ predicted by $F_g$ network. 
\begin{align}
\scalebox{0.95}{$ 
\hat{d}_j =\sum\limits_{k=I_j^-}^{I_j^+} d_k \beta_{k} \prod\limits_{i=I_j^-}^{k-1}\left(1-\beta_{i}\right), \quad j = 1,\dots,L,
$}
\label{eq:depth_composeover}
\end{align}
where $I_j^-$ and $I_j^+$ are the indices of the bounding planes for each group: $I_j^- = 1 + \left(j - 1\right)P/L,$ and $I_j^+ = jP/L. $
For simplicity of notation, layers and planes in \cref{eq:depth_composeover} are enumerated in front-to-back order.
The `opacities' $\beta_{I_j^+}$ (corresponding to farmost planes of each group) are manually set to 1 for $1 \leq j \leq L$.
As a result, the depth of the $j$-th layer is bounded by design: $d_{I_j^-} \leq \hat{d}_j \leq  d_{I_j^+}$. The compositing \cref{eq:depth_composeover} is evaluated independently for each of $h \times w \times L$ positions.

The group compositing scheme is inspired by the merge procedure from \cite{immersive_lf_video}, but moves this procedure inside the learnable scene representation and before texture and transparency estimation. The main benefit of the GC procedure is the guarantee that the $L$ layers do not intersect and have explicit ordering.

\topic{Soft-aggregation (SA) scheme.}
The main drawback of the GC depth aggregation is non-adaptive partition of the depth interval into $L$ layers. Such a non-adaptive partition tends to waste representation capacity for parts that do not contain scene surfaces and to underfit parts where multiple layers are beneficial for scene representation.
To overcome this, we make $F_g$ to predict the tensor of size $h \times w \times L \cdot P$, that is further reshaped to $h \times w \times L \times P$.
After that, softmax is applied along the last axis, and the values obtained are used as weights for the planes' depths $\{d_k\}_{k=1}^P$ (where the $P$ depths span the whole depth range).
These depths are averaged with the predicted weights, and the resulting tensor with the shape $h \times w \times L$ is obtained, which contains the depths of the layers.
It is worth noting that, unlike the GC approach, this scheme neither provides any ordering of layers, nor guarantees the absence of intersections.
Therefore, a special loss promoting non-intersection should be applied during training.
 
\topic{Bounds interpolation (BI) scheme.} 
We also consider a simplified version of SA scheme that predicts only weights to blend the minimum depth value $d_1$ and the maximum depth value $d_P$ (effectively predicting depths by direct regression).
In this scheme,  $F_g$ network returns the tensor of shape $h \times w \times L$ with values in the range between 0 and 1. 
The depth $\hat{d}_j$ of $j$-th layer is computed as
$\hat{d}_j = \beta_{j} d_1 + \left(1-\beta_{j}\right)d_P$,  where $\beta_j$ is the output of the geometry network. 
In our experiments, this scheme achieves the best results; thus, we select the BI method as our default one.

\topic{Meshing.} Irrespective of the layer depth prediction scheme, we treat each predicted layer as a mesh. We use the simplest mesh connectivity pattern, connecting each vertex with the six nearby nodes with edges so that each quad defined by four adjacent vertices is meshed with two triangles.
Hereinafter, the whole set of resulting $L$ meshes is referred to as the  \textit{layered mesh}.
The examples of estimated geometry are showcased in \cref{fig:layers_slices}.
Now we explain the explored approaches to depth prediction.

\begin{table*}[ht]
\centering
\begin{minipage}{0.95\linewidth}
\resizebox{\textwidth}{!}{
\begin{tabular}{lccccccccccccccccc}
\toprule
& \multicolumn{4}{c}{\bf SWORD }
& \multicolumn{5}{c}{\bf RealEstate10K}
& \multicolumn{5}{c}{\bf LLFF} \\
\cmidrule{2-5}  \cmidrule{7-10}  \cmidrule{12-15} 
& PSNR $\uparrow$ & SSIM $\uparrow$ & LPIPS $\downarrow$ & \FLIP $\downarrow$ & 
& PSNR $\uparrow$ & SSIM $\uparrow$ & LPIPS $\downarrow$ & \FLIP $\downarrow$  &
& PSNR $\uparrow$ & SSIM $\uparrow$ & LPIPS $\downarrow$ & \FLIP $\downarrow$ \\
\midrule
StereoMag-32 & 24.45 & 0.76 & 0.107 & 0.17  && 31.40 & 0.93 & 0.031 & 0.10  &&  20.67 & 0.65 & 0.132 & 0.24  \\
StereoMag-8 & 23.00 & 0.69 & 0.126 & 0.21  && 27.76 & 0.90 & 0.044 & 0.17 && 19.13 & 0.55 & 0.152 & 0.29  \\
StereoMag-P8 & 22.31 & 0.66 & 0.209 & 0.22 && 22.00 & 0.69 & 0.160 & 0.24 && 20.11 & 0.63 & 0.156 & 0.27  \\
StereoMag-P4 & 23.69 & 0.74 & 0.137 & 0.20  && 28.06 & 0.89 & 0.066 & 0.15 &&  20.29 & 0.64 & 0.150 & 0.26 \\
IBRNet & 23.82 & 0.71 & 0.188 & 0.17  && 30.26 & 0.90 & 0.058 & 0.10  && 21.19 & 0.67 & 0.207 & 0.22  \\
\modelname-8 & 25.54 & 0.79 & 0.113 & \bf{0.14}  && 31.52 & 0.92 & 0.027 & 0.10 &&  21.58 & 0.69 & 0.149 & 0.21  \\
\modelname-4 & \bf{25.95} & \bf{0.81} & \bf{0.096} & \bf{0.14}  &&  \bf{32.61} & \bf{0.94} & \bf{0.026} & \bf{0.08} &&  \bf{22.19} & \bf{0.73} & \bf{0.125} & \bf{0.20}  \\  
\modelname-2 & 25.28 & 0.78 & 0.102 & \bf{0.14}  && 31.29 & 0.92 & 0.025 & 0.09 &&  20.78 & 0.66 & 0.141 & 0.22  \\
\bottomrule
\end{tabular}
}
\end{minipage}
\caption{Results of the evaluation on SWORD, RealEstate10K~\cite{stereomag}, and  LLFF datasets~\cite{llff}. 
For the latter dataset, models were trained on SWORD.
All metrics are computed on central crops of synthesized novel views. 
Our approach outperforms all baselines on these datasets, although it contains fewer layers in the scene proxy. 
In particular, the \modelname method surpasses IBRNet despite the fact that the latter was trained on $80\%$ of LLFF scenes in a multiview setting.
The digit after the type of the model denotes the number of layers in the estimated geometry. Suffix $P$ stands for the model after the applied postprocessing.
}
\label{tab:main_scores}
\vspace{-8pt}
\end{table*}

\subsection{Mesh texturing}
The second stage of the inference process completes the reconstruction of the scene by inferring the color and opacity textures of the layered mesh. The process is similar to~\cite{stereomag} and follow-up works with some important modifications. Most importantly, we consider non-planar/non-spherical layers predicted by the previous stage. We thus `unproject' the side view onto each of the $L$ layers, and then sample each of those reprojections at the $H\times{}W$ resolution. We employ the \texttt{nvdiffrast} differentiable renderer~\cite{Laine2020diffrast} to make the rasterization process differentiable \wrt the layered mesh geometry.
The reference view sampled at the same resolution is concatenated, resulting in a $H\times{}W\times{}(3L+3)$-sized tensor.

This tensor is then processed by \textit{coloring network} $F_c$ which aims to infer the color and opacity values for the mesh layers. 
Ultimately, our goal is predict the RGB and alpha values for each pixel in each layer. Previously, the authors of~\cite{stereomag} observed that predicting the RGB color indirectly produces better results. That is, they predicted a single ``background'' RGB image of size $H \times W \times 3$ and a layer-specific tensor of size $H \times W \times L \times 2$ that provides  blending weights for the linear combination of the reference view with the background. We confirm their findings. We have further observed that in our case even better results can be obtained by predicting an additional mixture weight tensor of size $H \times W \times L$ that contains blending weights for the side view unprojected to each layer.  

Summarizing, within our texture prediction scheme, the network $F_c$ thus produces a tensor of size $H \times W \times \left(3L+3\right)$ with the last three channels corresponding to the background image, and the remaining channels contain the mixture weights for the \textbf{R}eference view, the unprojected \textbf{S}ide view, and the \textbf{B}ack\textbf{g}round image. We refer to this scheme as \textit{RSBg}, and it is default in our experiments. In the ablation study, we further compare it with the scheme employed in~\cite{stereomag}, where only reference and background images are blended into the texture (denoted \textit{RBg}), and with the scheme that predicts RGB colors directly (denoted RAW).

In all cases, in addition to the RGB values, the network $F_c$ also predicts a tensor of shape $H \times W \times L$ containing the opacity (alpha) values for each layer. Note that the texturing scheme is able to eliminate redundant layers by setting their opacity values to zero.
\cref{fig:layers_slices} provides such examples, where some layers were made transparent by the texturing network. 

The architecture of $F_c$ is borrowed from~\cite{stereomag} (except for different shapes of the output tensors). Thus, it is a 2D-convolutional UNet-like net with dilated convolutions in the bottleneck. For completeness, we detail this architecture in the Supplementary material (\ifArxivVersion\cref{sec:architectures}\else Sec.~S1\fi).

\topic{Rendering.} To render a novel view, we project the mesh layers according to the desired pose of the camera while composing them using the \textit{compose-over} operator~\cite{compose_over}.
\cref{fig:magnification} demonstrates novel views synthesized with the proposed pipeline.

\subsection{Learning}
\label{subsec:learning}
We learn the parameters of the geometry network $F_g$ and the coloring network $F_c$ from datasets of short videos of static scenes, for which camera pose sequences have been estimated using structure-from-motion~\cite{colmap}. 
Overall, the training is performed by minimizing the weighted combination of losses discussed below.

\topic{Image-based losses.}
Similarly to previous work, \eg \cite{stereomag}, the main training loss comes from image supervision. For example, at each training step, we sample the image triplet $(I_s,I_r,I_n)$ containing the side view $I_s$, the reference view $I_r$ and the novel (hold-out) view $I_n$ from a training video.  Given the current network parameters, we estimate the scene geometry and textures from $(I_s,I_r)$ and then project the resulting representation to the $I_n$ resulting in the predicted image $\hat{I}_n$. We then compute the perceptual~\cite{johnson2016perceptual} and the $L_1$ losses between $I_n$ and $\hat{I}_n$ and backpropagate them through the networks $F_g$ and $F_c$.

\topic{Regularization losses.} 
As was explained above, BI and SA schemes of depth prediction cannot guarantee the ordering of layers.
Therefore, we apply a simple hinge loss with zero margin to layers with neighbor indices to ensure that they are predicted in front-to-back order: $L_{ord} = \sum_{j=0}^{L-1} \max \left(0, \hat{d}_{j} - \hat{d}_{j+1}\right).$ 
Additionally, we regularize the geometry of the layers by imposing the total variation (TV) loss on the depths of each layer (the total variation is computed for each of the $L$ maps encoding the depths). 

\topic{Adversarial loss.} 
While image-based and geometric losses suffice to obtain the plausible quality of novel view generation for RSBg and RBg coloring schemes (see \cref{sec:experiments} for metrics), we did not manage to obtain satisfactory results with RAW scheme without adversarial learning.
Specifically, we impose adversarial loss~\cite{goodfellow2014generative} only for the RAW scheme on the predicted images $\hat{I}_n$. The main goal of adversarial loss is to reduce unnatural artefacts such as ghosting and duplications.  To regularize the discriminator, $R_1$ penalty~\cite{mescheder_which_2018} is applied. We stress that adversarial loss is only needed for RAW prediction and is not used in our default configuration.

\begin{table}[t]
\centering
\begin{minipage}{\columnwidth}
\centering
\resizebox{0.9\textwidth}{!}{
\begin{tabular}%
{llcccc}
\toprule
\multirowcell{2}[0pt][l]{Depth \\estimation} & \multirowcell{2}[0pt][l]{Texturing \\scheme} & \multirowcell{2}{PSNR $\uparrow$} & \multirowcell{2}{SSIM $\uparrow$} & \multirowcell{2}{LPIPS $\downarrow$} & \multirowcell{2}{\FLIP $\downarrow$} \\%
&&&&& \\%
\midrule
BI & RSBg & \bf{25.95} & \bf{0.81} & \bf{0.096} & \bf{0.14}     \\
BI & RBg & 24.96 & 0.77 & 0.111 & 0.15  \\
BI & RAW & 24.90 & 0.77 & 0.099 & 0.15  \\
SA & RSBg & 24.66 & 0.76 & 0.121 & 0.16 \\
SA & RAW & 24.30 & 0.75 & 0.099 & 0.15 \\
GC & RSBg & 25.24 & 0.77 & 0.115 & 0.15  \\
GC & RAW & 24.90 & 0.77 & 0.107 & 0.15    \\
\bottomrule
\end{tabular}
}
\end{minipage}
\caption{Evaluation of \modelname configs. 
    Top row represents a model chosen as a default one.
    Details of the predicting schemes are discussed in \cref{sec:method}.
    All the configurations were trained with $P=32$ planes and $L=4$ layers.
    }
\label{tab:ablation}
\vspace{-6pt}
\end{table}

\begin{table}[t]
\centering
\resizebox{0.8\columnwidth}{!}{
\begin{tabular}{llcc}
\toprule
Dataset & Baseline & Our score, \% &  $p$-value \\%
\midrule
\multirowcell{2}[0pt][l]{SWORD} & StereoMag-32         & $61$ & $<0.001$ \\%
& IBRNet       & $81$ & $<0.001$ \\%
\midrule
\multirowcell{2}[0pt][l]{LLFF} & StereoMag-32         & $54$ & $<0.001$ \\%
& IBRNet       & $70$ & $<0.001$ \\%
\bottomrule
\end{tabular}
}
\caption{User study results. 
    The 3rd column contains the ratio of users who selected the output of our model (\modelname-4) as more realistic in side-by-side comparison. 
}
\vspace{-6pt}
\label{tab:user_study}
\end{table}

\begin{table}[t]
\centering
\begin{minipage}{\linewidth}

\resizebox{0.95\textwidth}{!}{
\begin{tabular}{llcccc}
\toprule
Transfer & Model & PSNR $\uparrow$ & SSIM $\uparrow$ & LPIPS $\downarrow$ & \FLIP $\downarrow$ \\
\midrule
\multirow{2}{*}{(R) $\rightarrow$ (S)} & StereoMag-32 & 24.45 & 0.76 & 0.107 & 0.17  \\
& \modelname-4  & 25.47 & 0.79 & 0.098 & 0.14 \\
\cmidrule{2-6}
\multirow{2}{*}{(S) $\rightarrow$ (R)} & StereoMag-32 & 31.40 & 0.93 & 0.031 & 0.10  \\
& \modelname-4 & 31.91 & 0.93 & 0.029 & 0.09  \\
\midrule
\multirow{2}{*}{(R) $\rightarrow$ (L)} & StereoMag-32 &  20.31 & 0.62 &  0.129 &  0.23 \\
& \modelname-4 & 21.52 & 0.69 & 0.133 & 0.21  \\
\cmidrule{2-6}
\multirow{2}{*}{(S) $\rightarrow$ (L)} & StereoMag-32  & 20.67 & 0.65 & 0.132 & 0.24 \\
& \modelname-4 & 22.19 & 0.73 & 0.125 & 0.20  \\
\bottomrule
\end{tabular}
}
\end{minipage}
\caption{Cross-dataset generalization. We evaluate models on RealEstate10k (R), SWORD (S) and LLFF (L) datasets. 
Notaion (X) $\to$ (Y) stands for a model, trained on dataset X and being evaluated on Y.
Generally, our approach is on par or more robust to the dataset shift, while having a more compact representation. Evaluation on hold-out LLFF dataset also shows the benefit of training on the proposed SWORD dataset (compared to RealEstate10k).
}
\label{tab:generalization_re_sword}
\vspace{-10pt}
\end{table}

\section{Experiments}
\label{sec:experiments}

\subsection{Datasets}
We consider the RealEstate10k dataset and the Local Lightfield Fusion (LLFF) dataset introduced in previous works, while also proposing a new dataset. The details of the three datasets are provided below.

\topic{RealEstate10k dataset.} Following prior works~\cite{stereomag,synsin,3d_layered_inpainting}, we evaluate our approach on the subset of \emph{RealEstate10k}~\cite{stereomag} dataset containing consecutive frames from real estate videos with camera parameters. The subset used in our experiments consists of $10,000$  scenes for training and $7,700$ scenes for test purposes. The RealEstate10k dataset serves as the most popular benchmark for novel view synthesis pipelines. Despite the relatively large size, the diversity of scenes in the dataset is limited. The dataset is predominantly indoor and also does not contain enough scenes with central objects. Consequently, models trained on RealEstate10k generalize poorly to outdoor scenes or scenes with large close-by objects~\cite{stereomag,3d_layered_inpainting}.

\topic{SWORD dataset.} To evaluate both our and prior methods on more diverse data, we have collected a new dataset, which we call \emph{`Scenes With Occluded Regions' Dataset} (SWORD). The new dataset contains around $1,500$ train video and $290$ test videos, with 50 frames per video on average. The dataset was obtained after processing the manually captured video sequences of static real-life urban scenes. The processing pipeline was the same as described in~\cite{stereomag}.

The main property of the dataset is the abundance of close objects and, consequently, the larger prevalence of occlusions.
To prove this quantitatively,  we calculate the occlusion areas, that is, areas of those regions of the novel frames that are occluded in the reference frames. To obtain masks for such regions, the off-the-shelf optical flow estimator~\cite{raft} is employed. The complete procedure for getting the occlusion masks and the examples of those masks are provided in Supplementary (\ifArxivVersion\cref{sec:occlusion_masks}\else Sec.~S5\fi). According to this heuristic, the mean area of occluded image parts for SWORD is approximately five times larger than for RealEstate10k data (14\% vs 3\% respectively). This rationalizes the collection and usage of SWORD and explains that SWORD allows training more powerful models despite being of smaller size.

\topic{LLFF dataset.} LLFF dataset is another popular dataset with central objects that was released by the authors of the paper on Local Light Field Fusion~\cite{llff}. It is too small to train a network on it (40 scenes), and we use these data for evaluation goals only to test the models trained on the other two datasets.

\subsection{Evaluation details}

\topic{Compared approaches.} We use the system described in~\cite{stereomag} as our main baseline and refer to it as \emph{\baselinestereomag}. By default, \baselinestereomag uses 32 regularly spaced fronto-parallel planes (with uniformly spaced inverse depth), for which color and transparency textures are estimated by a deep network operating on a plane sweep volume. The obtained representation is known  as ``multi-plane images'' (MPI). The original system uses this plane-based geometry for final renderings. We refer to this baseline as \textit{\baselinestereomag-32} or omit the number of planes for brevity if it is equal to the default.

Additionally, we have evaluated variants of the \baselinestereomag (denoted as \textit{\baselinestereomag-P8} and \textit{-P4}) that coalesce the 32 planes into 8 or 4  non-planar meshes respectively. The coalescence procedure is detailed in the Supplementary (\ifArxivVersion\cref{sec:postprocessing}\else Sec.~S4\fi) and is very similar to the one proposed in~\cite{immersive_lf_video}. Finally,  we trained a variant of \baselinestereomag with eight planes (\textit{\baselinestereomag-8}). 
We stress that while \baselinestereomag system was proposed some time ago, based on the comparison in the recent work~\cite[Appendix A]{3d_layered_inpainting}, it remains state-of-the-art for two image inputs.

We also consdider the more recent IBRNet~\cite{ibrnet} system trained to model the radiance field of the scene by blending features of the source images.
Unlike StereoMag, this approach has no restrictions on the number of input frames, although it requires a very significant amount of computation to generate each view. 
Moreover, as the authors have shown, IBRNet shows its best quality after fine-tuning to the new scene under consideration.
For evaluation, we used the implementation and checkpoints of the network, provided by the authors, who used 80\% of LLFF dataset for training among other data. Despite this, we compared our approach with this method on all data (including LLFF). We also tried to retrain IBRNet on the SWORD dataset (in the setting of two input images). This, however, led to considerably worse performance, so we stick with the authors' provided variant. We also note that test-time fine-tuning of IBRNet is not possible with two view inputs.  

We trained different variations of our model with $L \in \{2, 4, 8\}$ layers obtained from $P=32$ planes of PSV, unless another number is specified.
All models were trained for $500,000$ iterations with batch size $4$ on a single NVIDIA P40 GPU. The training time is not particularly different for the listed variants of the model.
For our approach, we set the following weights for the losses described above: $1$ for $L_1$ loss, $10$ for perceptual loss, $5$ for TV regularization, and $2$ for ordering loss. The RAW scheme for RGB prediction requires a careful tuning of parameters, and we report its results for the configuration with adversarial and feature matching losses with weights set to $5$, while the discriminator gradient was penalized every $16$-th step with the weight of $R_1$ penalty equal to $0.0001$. 
Most experiments were conducted at the resolution of $256$ on the smallest side.

\topic{Metrics.} 
We follow the standard evaluation process for the novel view task and measure how similar the synthesized view is to the ground-true image. Therefore, we compute the peak signal-to-noise ratio (PSNR), structural (SSIM), and perceptual (LPIPS~\cite{lpips}) similarity, as well as the recently introduced \FLIP metric~\cite{Andersson2020} between the obtained rendering and the ground truth.
Artifacts in areas near the image boundary are similar both for planes and layers, and we exclude those regions from consideration by computing metrics over the central crops. %

Finally, to measure the plausibility of rendered images, we perform the study of human preference on a crowdsourcing platform. The evaluation protocol was as follows: The assessors were shown two short videos with the virtual camera moving along the predefined trajectory in the same scene from SWORD (validation subset) or LLFF: one video was obtained using the baseline model, and another one was produced with our approach.
We asked the users which of the two videos looked more realistic to them.
In total, we generated 280 pairs of videos (120 from LLFF and 160 from SWORD scenes), and twenty different workers assessed each pair.

\begin{figure}[t]
    \centering
    \includegraphics[width=0.9\columnwidth]{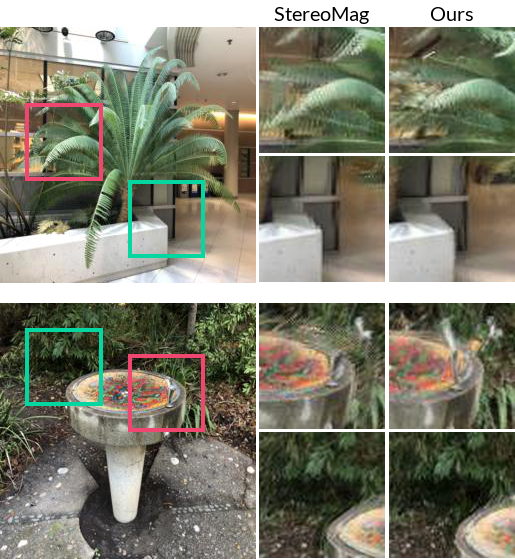}
    \caption{
        Comparison on  challenging scenes from the LLFF~\cite{llff}. The leftmost column shows the ground truth, two other columns demonstrate patches of a novel view obtained with  StereoMag-32 and our system respectively.
        In the cutout StereoMag results, small translations of a camera from the reference pose reveal discontinuities in the approximate scene geometry leading to ghosting artefacts. In our case, thanks to the scene-adapted geometry, ghosting is not so apparent.
    }
    \label{fig:crop_ours}
\vspace{-8pt}
\end{figure}

\subsection{Main results}
\label{subsec:ablation}

\topic{Ablation results.} In \cref{tab:ablation} we present the relative performance of several schemes of depth estimation (denoted GC, SA and BI) and mesh texturing (RSBg, RBg, RAW). For this ablation, all systems were trained and evaluated on the SWORD dataset. As the results show, the best metrics are obtained with a combination of BI + RBSBg methods. Therefore, we choose this model as our default one and refer to it as just \modelname model. This pipeline is used in further experiments, unless another configuration is explicitly specified.

\topic{Comparison with prior art.} The main results are reported in \cref{tab:main_scores}.
Here, due to the relatively small size of the validation part of the both of SWORD and LLFF datasets, we sampled multiple triplets (the reference, side, and novel cameras) for each scene to get a more accurate estimation of the score.
We selected the model with $L=4$ layers as our main variant because it performs better on the hold-out LLFF data.
It consistently outperforms the baseline StereoMag-32 model  according to the considered metrics, while containing significantly less layers.

Post hoc coalescing of StereoMag representations into non-planar layers worsened the results (this finding is consistent with one reported in \cite{immersive_lf_video}). Finally, the eight-planar structure is consistently worse than the 32-plane one.

At the same time, the results of our model with four layers are even better than when using more (eight) layers. Furthermore, a configuration with just two layers remains competitive (better than eight-layer configuration in some metrics and better than StereoMag-32 in most metrics). 

Notably, there is a large gap between our configuration with four layers and the StereoMag method with geometry merged into four layers (StereoMag-P4). This emphasizes the benefit of the end-to-end training used by our method.
As showcased in \cref{fig:layers_slices}, the novel approach approximates the scene geometry in a reasonable way even with just two layers and, vice versa, is able to `zero out' the redundant layers.
We attribute the superiority of our method over StereoMag-P  to the proposed end-to-end training procedure.

We show the percentage of times users prefer each method in \cref{tab:user_study}.
One of our qualitative improvements is illustrated in \cref{fig:crop_ours}: the deformable layers successfully overcome the ``ghost'' edge artifacts, occasionally observed in the case of rigid planes. 

Also, we conducted a separate study of the model's sensitivity to the group size when estimating the scene geometry (\ie the ratio of the number of planes in PSV $P$ to the number of layers $L$).
In summary, quality does not change dramatically under variation in group size; see Supplementary (\ifArxivVersion\cref{sec:additional_result}\else Sec.~S2\fi) for numerical details.

In the supplementary video, we show the results on a wide range of photos from different datasets. The video contains a comparison with StereoMag and IBRNet that demonstrates that our approach produces less blurry details while having a similar quality of estimated geometry. We encourage the reader to watch the supplementary video.

\topic{Cross-dataset evaluation.} 
As mentioned above, SWORD contains mostly outdoor scenes with  a central object, which is similar in nature to the LLFF dataset. 
It is the main reason why we observed a pretty good quality of the model trained on SWORD and evaluated on LLFF (the rightmost part of \cref{tab:main_scores}).
We have also investigated a more challenging  setting: the performance of methods in the cross-dataset setting is reported in \cref{tab:generalization_re_sword}: we cross-evaluate our and baseline models on RealEstate10k and SWORD datasets that are rather different.

\topic{Timings.} The representations produced by our method are well suited for rendering within mobile photography applications.
Thus, on Samsung Galaxy S20 (Mali-G77 GPU), rendering our representations at $512\times{}256$ resolution runs at about 180 frames per second. 
Furthermore, our representations can be quickly created from new stereo pairs (our current unoptimized inference takes 0.19 seconds on an NVidia P40 GPU).

\section{Summary and discussion}
In this work, we proposed an end-to-end pipeline that recovers the geometry of the scene from an input stereo pair using a fixed number of semitransparent layers.
Despite using fewer layers (4 layers \vs 32 planes for the baseline StereoMag model), our approach demonstrated superior quality in terms of commonly used metrics for the novel view synthesis problem, as well as human evaluation.
Unlike the StereoMag system, the quality of which heavily depends on the number of planes, our method has reached better scores  while being robust to reducing the number of layers.
We have verified that the proposed method can be trained on multiple datasets and generalizes well to unseen data.
The resulting mesh geometry can be effectively rendered using standard graphics engines, making the approach attractive for mobile 3D photography.

Additionally, we presented a new challenging SWORD dataset, which contains cluttered scenes with heavily occluded regions. Even though SWORD consists of fewer scenes than the popular RealEstate10K dataset, systems trained on SWORD are likely to generalize better to other datasets, \eg the LLFF dataset.

\cleardoublepage

\appendix
\renewcommand{\thetable}{S\arabic{table}}
\renewcommand{\thefigure}{S\arabic{figure}}
\renewcommand{\theequation}{S\arabic{equation}}

\setcounter{figure}{0}
\setcounter{table}{0}
\setcounter{equation}{0}

\section{Network architectures}
\label{sec:architectures}
\topic{Geometry network $F_g$.} 
The architecture of our depth estimator resembles the network from SynSin~\cite{synsin}.
It takes the plane sweep volume (PSV) as its input and returns `opacities' for each of the $P$ regular planes, that are used to construct deformable layers.
Each block sequentially applies a convolution, layer normalization and LeakyReLU to the input tensor.
We apply spectral normalization~\cite{miyato_spectral_2018} to the convolution kernel weights. 
Other details are given in \cref{tab:depth_network_architecture}.

\begin{table}[t]
\centering
\resizebox{\columnwidth}{!}{\begin{tabular}{lcccccc}
\toprule
Block    & K & S  & D & P & C       & Input \tabularnewline
\midrule
Conv1\_1 & 4 & 2  & 1 & 1 &  32     & PSV \tabularnewline
Conv1\_2 & 4 & 2  & 1 & 1 &  64     & Conv1\_1 \tabularnewline
Conv1\_3 & 4 & 2  & 1 & 1 &  128    & Conv1\_2 \tabularnewline
Conv2\_1 & 4 & 2  & 1 & 1 &  256    & Conv1\_3 \tabularnewline
Conv2\_2 & 4 & 2  & 1 & 1 &  256    & Conv2\_1 \tabularnewline
Conv2\_3 & 4 & 2  & 1 & 1 &  256    & Conv2\_2 \tabularnewline
Conv2\_4 & 4 & 2  & 1 & 1 &  256    & Conv2\_3 \tabularnewline
Conv2\_5 & 4 & 2  & 1 & 1 &  256    & Conv2\_4 \tabularnewline
\midrule
Conv3\_1 & 3 & 1  & 1 & 1 &  256    & Conv2\_5$\uparrow$ \tabularnewline
Conv3\_2 & 3 & 1  & 1 & 1 &  256    & \texttt{concat}[Conv3\_1, Conv2\_4]$\uparrow$ \tabularnewline
Conv3\_3 & 3 & 1  & 1 & 1 &  256    & \texttt{concat}[Conv3\_2, Conv2\_3]$\uparrow$ \tabularnewline
Conv3\_4 & 3 & 1  & 1 & 1 &  256    & \texttt{concat}[Conv3\_3, Conv2\_2]$\uparrow$  \tabularnewline
Conv4\_1 & 3 & 1  & 1 & 1 &  128    & \texttt{concat}[Conv3\_4, Conv2\_1]$\uparrow$ \tabularnewline
Conv4\_2 & 3 & 1  & 1 & 1 &  64     & \texttt{concat}[Conv4\_1, Conv1\_3]$\uparrow$ \tabularnewline
Conv4\_3 & 3 & 1  & 1 & 1 &  32     & \texttt{concat}[Conv4\_2, Conv1\_2]$\uparrow$ \tabularnewline
Conv4\_4 & 3 & 1  & 1 & 1 &  $P$      & \texttt{concat}[Conv4\_3, Conv1\_1]$\uparrow$ \tabularnewline
\bottomrule
\end{tabular}}
\caption{Architecture of the geometry network $F_g$ for BI parameterization. \textit{K} is the kernel size, \textit{S} -- stride, \textit{D} -- dilation, \textit{P} -- padding, \textit{C} -- the number of output channels for each layer, and \textit{input} denotes the input source of each layer. Up-arrow $\uparrow$ denotes the 2x bilinear upscaling operation.
}
\label{tab:depth_network_architecture}
\end{table}

\topic{Coloring network $F_c$.}
The architecture of the coloring network is inspired by the one described in StereoMag paper~\cite{stereomag}. 
Each block consists of a convolution, layer normalization, and ReLU unit (except for the final block).
Detailed parameters for RSBg scheme are provided in \cref{tab:coloring_network_architecture}.

\begin{table}[t]
\centering
\resizebox{\columnwidth}{!}{\begin{tabular}{lcccccc}
\toprule
Block &         K & S  & D & P & C    & Input \tabularnewline
\midrule
Conv1\_1      & 3 & 1  & 1 & 1 &   64     & deformed PSV \tabularnewline
Conv1\_2      & 3 & 2  & 1 & 1 &  128     & Conv1\_1 \tabularnewline
Conv2\_1      & 3 & 1  & 1 & 1 &  128     & Conv1\_2 \tabularnewline
Conv2\_2      & 3 & 2  & 1 & 1 &  256     & Conv2\_1 \tabularnewline
Conv3\_1      & 3 & 1  & 1 & 1 &  256     & Conv2\_2 \tabularnewline
Conv3\_2      & 3 & 1  & 1 & 1 &  512     & Conv3\_1 \tabularnewline
Conv3\_3      & 3 & 2  & 1 & 1 &  512     & Conv3\_2 \tabularnewline
\midrule
Conv4\_1      & 3 & 1  & 2 & 2 &  512     & Conv3\_3 \tabularnewline
Conv4\_2      & 3 & 1  & 2 & 2 &  512     & Conv4\_1 \tabularnewline
Conv4\_3      & 3 & 1  & 2 & 2 &  512     & Conv4\_2 \tabularnewline
\midrule
TransConv5\_1 & 4 &  2 & 1 & 1 &  256     &   \texttt{concat}[Conv4\_3, Conv3\_3] \tabularnewline
TransConv5\_2 & 3 &  1 & 1 & 1 &  256     &  TransConv5\_1 \tabularnewline
TransConv5\_3 & 3 &  1 & 1 & 1 &  256     &  TransConv5\_2 \tabularnewline
TransConv6\_1 & 4 &  2 & 1 & 1 &  128     &  \texttt{concat}[TransConv5\_3, Conv2\_2] \tabularnewline
TransConv6\_2 & 3 &  1 & 1 & 1 &  128     &  TransConv6\_1 \tabularnewline
TransConv7\_1 & 4 &  2 & 1 & 1 &   64     &  \texttt{concat}[TransConv6\_2, Conv1\_2] \tabularnewline
TransConv7\_2 & 3 &  1 & 1 & 1 &   64     & TransConv7\_1 \tabularnewline
Conv7\_3      & 1 &  1 & 1 & 0 &   $4L+3$ &      TransConv7\_2 \tabularnewline
\bottomrule
\end{tabular}}
\caption{Architecture of the coloring network $F_c$ for the RSBg parameterization. \textit{K} is the kernel size, \textit{S} -- stride, \textit{D} -- dilation, \textit{P} -- padding, \textit{C} -- the number of output channels for each layer, and \textit{input} denotes the input source of each layer.
}
\label{tab:coloring_network_architecture}
\end{table}

\section{Additional results}
\label{sec:additional_result}
\topic{Scaling to hi-res.}
To investigate the scaling properties of our \modelname model, we additionally compared it with the baselines on high-resolution versions of datasets, described in the main text.
\cref{tab:hires_scores} presents the results of the trained network, applied to higher resolution in a fully convolutional manner.
It outperforms StereoMag operating in the same regime by a significant margin. 
Additionally, we compare the quality with the original IBRNet. This model achieves the best PSNR value and simultaneously the worst LPIPS. This is caused by inconsistency in the generated frames. Please see examples of such behaviour in the supplementary video.

Besides that, we conducted a user study on 80 scenes from SWORD (with resolution of $512 \times 1024$), 60 scenes from RealEstate10k ($576 \times 1024$) and 80 scenes (40 unique) from LLFF data ($512 \times 512$). All scenes and input views are randomly sampled from the test sets. The results of this experiment are reported in \cref{tab:userstudy_high}.

\begin{figure}[t]
    \centering
    \includegraphics[width=\columnwidth]{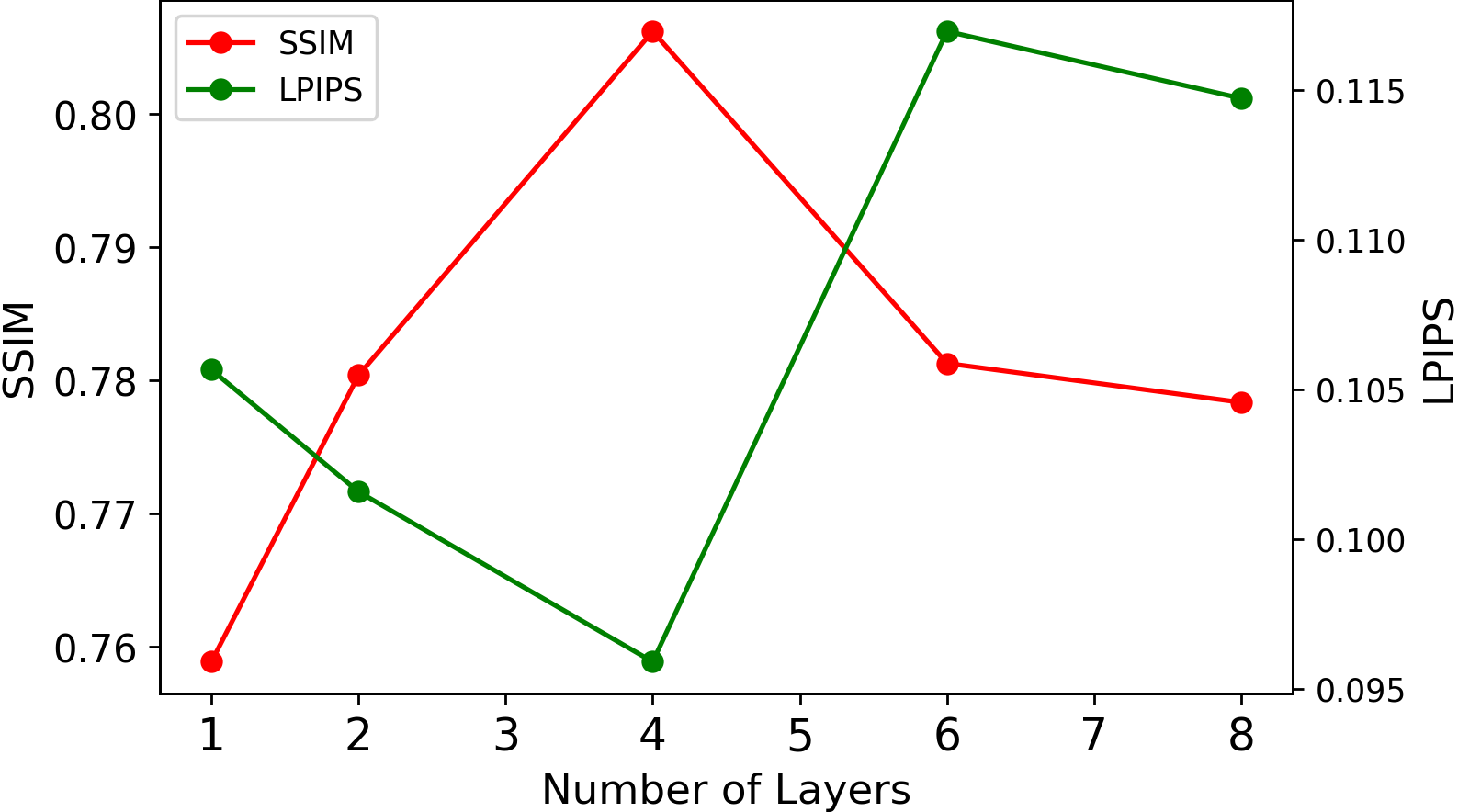}
    \caption{Performance of our system as a function of the number of layers. 
    The plot confirms the ability of our approach to represent complex scenes with just a few layers. 
    }
    \label{fig:depend_num_layers}
\end{figure}

\begin{table}[t]
\centering
\resizebox{\columnwidth}{!}{\begin{tabular}{lcccc}
 \toprule
Model & PSNR $\uparrow$ & SSIM $\uparrow$ &LPIPS $\downarrow$ & \FLIP $\downarrow$ \\
\midrule
IBRNet & \bf{27.4} & 0.67 & 0.219  & 0.27 \\
StereoMag (256$\rightarrow$512) & 23.3 &  0.65 &   0.178 & \bf{0.19}  \\
Ours (256$\rightarrow$512) & 24.2 & \bf{0.69} & \bf{0.155} & \bf{0.19} \\
\bottomrule
\end{tabular}
}
\caption{Scaling to higher resolution on SWORD dataset. We examine our model and StereoMag in a fully-convolutional regime: both were trained at resolution of $256 \times 512$ and applied for $512 \times 1024$. As in previous experiments, we used the checkpoint of IBRNet provided by the authors of the corresponding paper.
}
\label{tab:hires_scores}
\end{table}

\begin{table}[t]
\centering
\begin{tabular}{llcc}
\toprule
Dataset & Baseline & Our score, \% &  $p$-value \\%
\midrule
\multirowcell{2}[0pt][l]{SWORD} & StereoMag-32         & 55.62 & $<0.001$ \\%
& IBRNet       & 75.69 & $<0.001$ \\%
\midrule
\multirowcell{2}[0pt][l]{LLFF} & StereoMag-32         & 54.42 & $<0.001$ \\%
& IBRNet       & 50.27 & $<0.001$ \\%
\midrule
\multirowcell{2}[0pt][l]{RealEstate10k} & StereoMag-32         & 63.91 & $<0.001$ \\%
& IBRNet       & 60.74 & $<0.001$ \\%
\bottomrule
\end{tabular}
\caption{Additional user study on high-resolution images.
    The 3rd column contains the ratio of users who selected the  output of our model as more realistic under the two-alternative forced choice.
}
\label{tab:userstudy_high}
\end{table}

\topic{StereoMag with RSBg scheme.}
As was shown in the \ifArxivVersion\cref{tab:ablation} \else Tab.~2 \fi of the main text, our model trained with the RBg texturing scheme (which is the default for StereoMag) performs significantly worse than with RSBg:
LPIPS of 0.111 vs 0.096.
To demonstrate that the texturing scheme is not the most crucial part of our pipeline, we retrained StereoMag-32 model with RSBg scheme.
In particular, this modification did not improve the quality of the baseline on SWORD: SSIM of 0.77 vs 0.76, LPIPS of 0.107 vs 0.107.

\topic{Scene slices.}
\cref{fig:layers_slices_supmat} provides additional examples of the estimated geometry for different scenes.

\begin{figure*}[ht]
\centering
    \includegraphics[width=\linewidth]{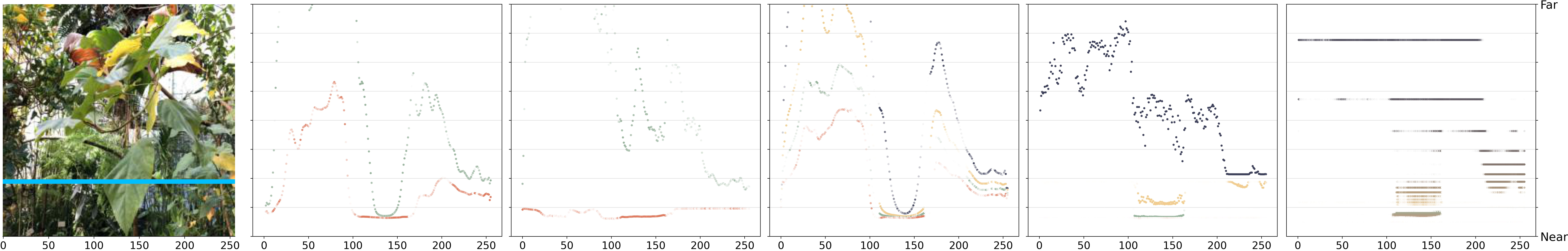}
    \includegraphics[width=\linewidth]{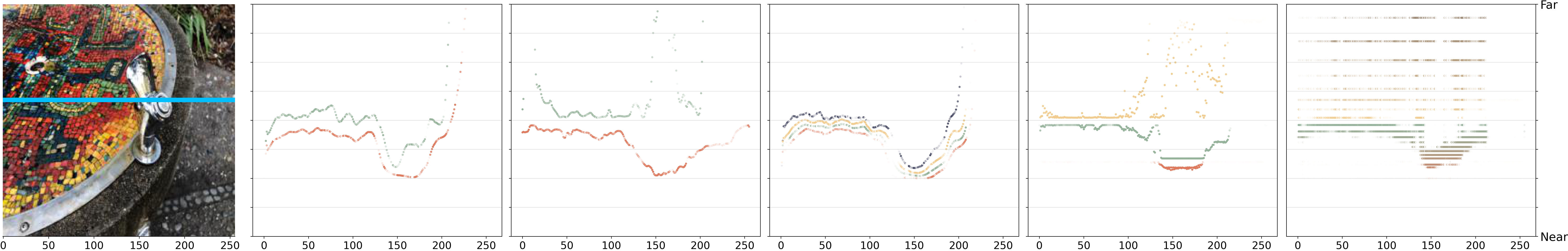}
    \includegraphics[width=\linewidth]{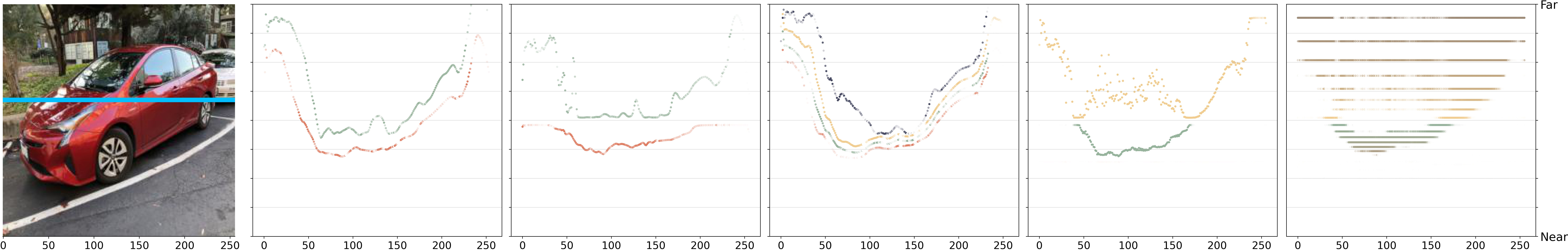}
    \includegraphics[width=\linewidth]{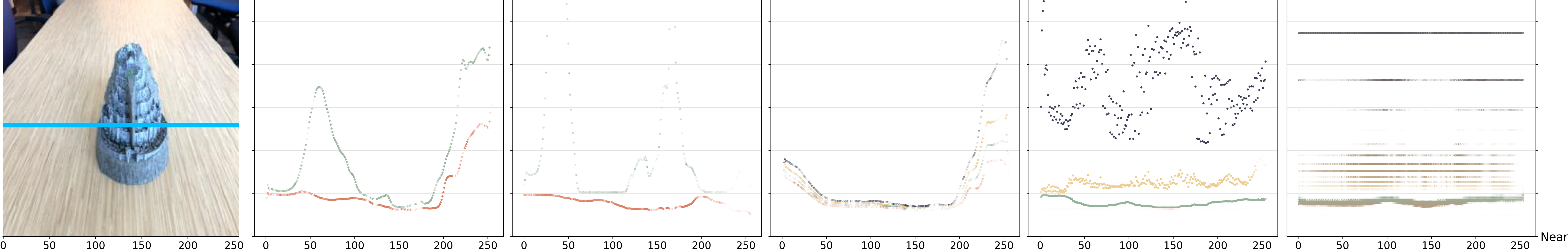}
    \includegraphics[width=\linewidth]{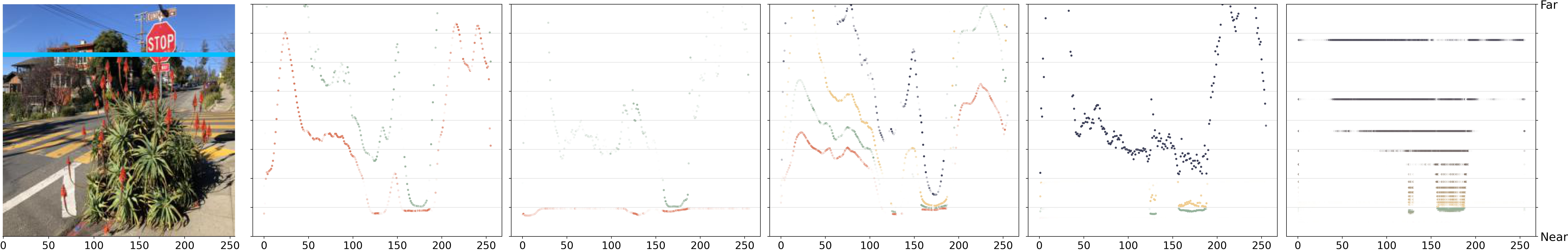}
    \includegraphics[width=\linewidth]{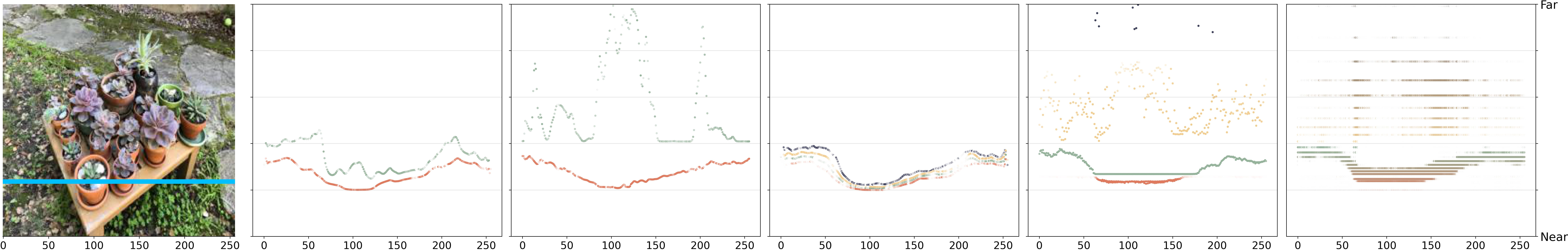}
    \includegraphics[width=\linewidth]{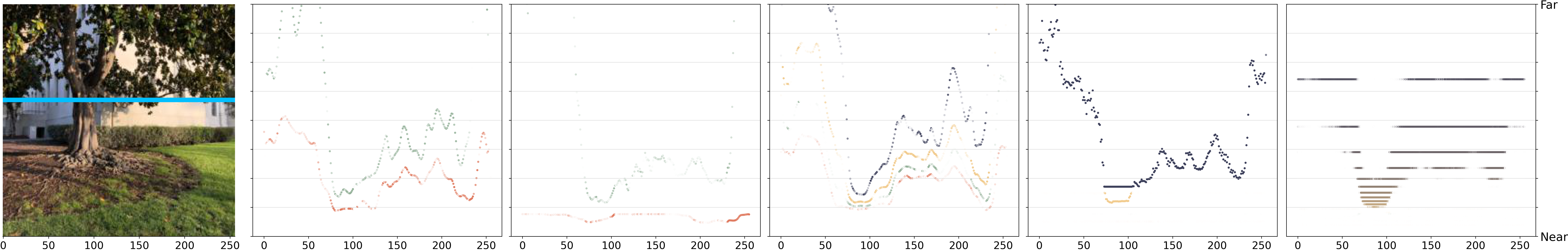}
\\ \makeatletter\@for\items:={Reference view,\modelname-2 (BI+RSBg),\modelname-2 (GC+RSBg),\modelname-4 (BI+RSBg),  StereoMag-P4,  StereoMag-32}\do{\minipage{0.16\textwidth}\centering\small \items \endminipage\hfill}\makeatother
    \vspace{-5pt}
    \caption{ 
        Additional horizontal slices (along the blue line) on  scenes from LLFF dataset. 
        Mesh vertices are shown as dots with the predicted opacity. Colors encode the layer number. The horizontal axis corresponds to the pixel coordinate, while the vertical axis stands for the vertex depth \wrt the reference camera (only the most illustrative depth range is shown). 
        Configurations of \modelname method generate scene-adaptive geometry in a more efficient way than StereoMag, resulting in more frugal geometry representation, while also obtaining better rendering quality.
    }
    \label{fig:layers_slices_supmat}
    \vspace{-10pt}
\end{figure*}

\topic{Number of layers in BI scheme.}
For MPI-based approaches, the number of planes was shown to be critical for constructing a plausible representation of the scene~\cite{pushing_bound,llff}.
To demonstrate the properties of our deformable layers, we consider the influence of the number of layers in the estimated geometry on common quality metrics.
\cref{fig:depend_num_layers} shows that the resulting performance falls as the number of layers decreases to one, proving that multi-layer structure is crucial. 
Perhaps surprisingly, the measured quality does not always grow as this number increases. We suggest that the model cannot handle the redundant geometry properly.
It is worth noting that the authors of the Worldsheet paper reported a similar effect in the single-image case~\cite{worldsheet}.

\topic{Number of layers per group in GC scheme.}
In addition to our main \emph{bounds interpolation} (BI) scheme of depth parameterization, we study the properties of the \emph{group compositing} (GC) model.
Namely, we investigate the performance of this system as a function of the number of planes in plane sweep volume during the geometry estimation step.
As \cref{tab:num_layers_depnd} shows, the resulting quality of the model does not depend on the size of the group. However, we see that if both the number of layers and the size of the group are reduced simultaneously, the quality deteriorates. And with an increase in the size of the group, there is no increase in metrics. In general, robustness to these parameters is provided by two points: the nature of the semitransparent proxy geometry, in which the alpha channel takes the main responsibility for the object structure, and the adaptive layered proxy geometry, which can bend itself under objects to depend less on the number of planes.

\begin{table}[t]
\centering
\resizebox{\columnwidth}{!}{
\begin{tabular}{ccccc}
\toprule
 \begin{tabular}[c]{@{}c@{}} Group size \\ $P / L$ \end{tabular}
    & \begin{tabular}[c]{@{}c@{}}  Number of\\planes $P$ \end{tabular} 
    &\begin{tabular}[c]{@{}c@{}}  Number of\\layers $L$ \end{tabular}  
    & LPIPS$\downarrow$ 
    & SSIM$\uparrow$
    \\
\midrule
4 & 16 & 4 & 0.129 & 0.67 \\
4 & 24 & 6 & 0.120 & 0.70 \\
4 & 32 & 8 & 0.119 & 0.70 \\
4 & 40 & 10 & 0.124 & 0.70 \\
\midrule
16 & 64 & 4 & 0.122 & 0.72 \\
32 & 64 & 2 & 0.121 & 0.71 \\
\midrule
15 & 120 & 8 & 0.119 & 0.70 \\
20 & 120 & 6 & 0.122 & 0.70 \\
30 & 120 & 4 & 0.120 & 0.70 \\
60 & 120 & 2 & 0.119 & 0.70 \\
\bottomrule
\end{tabular}
}
\caption{ Performance dependence on the number of layers and the size of the plane group for group compositing (GC) configuration. The quality in terms of SSIM and LPIPS is slightly dependent on the size of the group and the number of layers for $256 \times 256$ images.
}
\label{tab:num_layers_depnd}
\end{table}

\section{Failure cases}
\label{sec:failure_cases}
\begin{figure}[t]
    \centering
    \includegraphics[width=0.95\columnwidth]{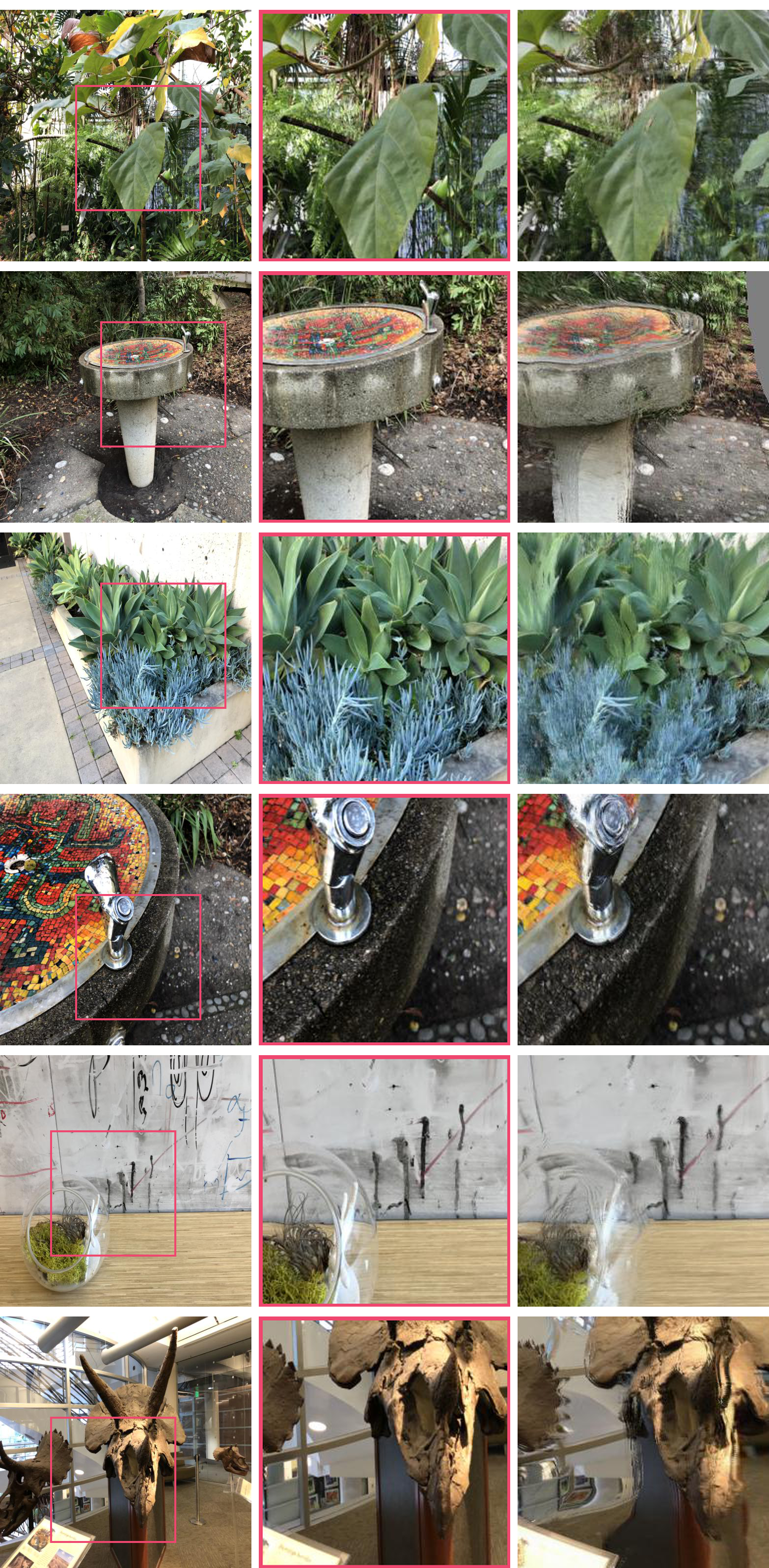}\\
    \begin{minipage}{0.95\linewidth}
    \makeatletter\@for\items:={Full frame,Ground truth,Rendered view}\do{\minipage{0.33\textwidth}\centering\small \items \endminipage\hfill}\makeatother
    \end{minipage}
    \caption{Examples of most common failures of \modelname outputs. In most cases they can be attributed to a combination of photometric scene complexity, and an unfortunate choice of the input pair.
    }
    \label{fig:failures}
    \ifArxivVersion\vspace{-12pt}\fi
\end{figure}
To demonstrate the limitations of our approach, we show typical artifacts of the method in \cref{fig:failures}.
Note that most of the drawbacks are visible only when the camera moves around the scene and are not distinguishable in randomly selected frames without temporal context.

When the baseline is magnified by a great factor, one can observe ``stretching'' faces of our layered mesh near the depth discontinuities.
We believe that this type of artifact is caused by the mesh structure of our geometry.
The ``ghost'' semitransparent textures is another common issue of the synthesized views.
One of the problems could also be attributed to inconsistent depth prediction when some pixels have minor errors in depth values, which leads to small ghostings.

\section{MPI postprocessing}
\label{sec:postprocessing}
In this section we briefly describe the postprocessing procedure that aims to merge the predicted rigid planes of StereoMag-32~\cite{stereomag} to the fewer number of deformable layers.
In our experiments, the final number of such layers equals 8, that coincides with the basic configuration of our approach.

The pipeline partially follows the one described in~\cite{immersive_lf_video}.
Firstly, we divide 32 planes into eight groups and compose-over the depth within each group on top of the furthest plane in the group.
This operation results in 8 deformable layers.
To infer the textures of those layers, we perform the second step, averaging the color $\mathbf{c}$  and transmittance $\bar{\alpha}$  of RGBA planes within each group over the set $V\left(t\right)$ of rays passing through the texel $t$.
Namely, we run the Monte Carlo ray tracing defined by the equations below,
\begin{align*}
    \log \left( \bar{\alpha}_t \right) &= \lambda^{-1} \int_{V\left( t \right)} w\left(\mathbf{r}\right) \left[ \log \left( \bar{\alpha}_\mathbf{r} \right) \right]^2 d \mathbf{r}, \\
    \mathbf{c}_t &= \lambda^{-1} \int_{V\left( t \right)} w\left(\mathbf{r}\right) \mathbf{c}_\mathbf{r} \log \left( \bar{\alpha}_\mathbf{r} \right) d \mathbf{r}, 
\end{align*}
where $\lambda$ is a normalizing constant
\begin{align*}
    \lambda =  \int_{V\left( t \right)} w\left(\mathbf{r}\right)  \log \left( \bar{\alpha}_\mathbf{r} \right)  d \mathbf{r}.
\end{align*}
The distribution of rays $\mathbf{r} \in V\left(t\right)$ is constructed as follows: the line passing through the pinhole camera and texel $t$ intersects the reference image plane at the pixel coordinate $p$. 
The coordinate $q$ is normally distributed around $p$, and the ray $\mathbf{r}$ passes from $q$ through $t$.
The weighing function $w\left(\mathbf{r}\right)$ is equal to the Gaussian density value at $q$.
Color $\mathbf{c}_\mathbf{r}$ and transmittance $\bar{\alpha}_\mathbf{r}$ values are computed with the compose-over operation along the ray $\mathbf{r}$ over the planes that belong to the same group as texel $t$ does.

\section{Occlusion masks}
\label{sec:occlusion_masks}
In this section, we describe the heuristic to create masks of occluded regions.
Examples of such masks are provided in Fig.~\ref{fig:raft_mask}.

\subsection{Cycle consistency of optical flows}
Consider two images $A$ and $B$, without loss of generality, they are assumed to be grayscale.
For the coordinates of the pixel $p$ we denote the color of this pixel in the image $A$ as $A\left[ p \right].$
The coordinate grid $G$ is such a ``image'' (two-dimensional matrix) that $\forall p \, G\left[ p \right] = p$.
We define the backward flow matrix $\overset\leftarrow{F}_{AB}$ of images $A$ and $B$ and the backward warping \texttt{backward} operation as follows
\begin{equation}
\footnotesize
    B = \texttt{backward} \left(A, \overset\leftarrow{F}_{AB} \right) \iff \forall q \, B\left[q\right] = A\left[ \overset\leftarrow{F}_{AB} \left[ q \right]\right].
\label{eq:backward_flow}
\end{equation}

Similarly, forward flow matrix $\overset\rightarrow{F}_{AB}$ and \texttt{forward} warping are defined as
\begin{equation}
\footnotesize
    B = \texttt{forward} \left(A, \overset\rightarrow{F}_{AB} \right) \iff \forall p \, A \left[p\right] = B \left[ \overset\rightarrow{F}_{AB} \left[ p \right]\right].
\label{eq:forward_flow}
\end{equation}

\begin{lemma}
For two optical flows of the same kind $F_{AB}$ and $F_{BA}$ the following cycle-consistency property holds
$$\texttt{backward} \left(F_{BA}, F_{AB}\right) = G.$$
\label{th:flows_consistency}
\end{lemma}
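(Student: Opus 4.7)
My plan is to proceed by direct unfolding of the definitions. Applying the backward-warping definition~(5) to the pair of coordinate-valued matrices $(F_{BA}, F_{AB})$ reduces the claim to the pointwise identity
\[
F_{BA}\bigl[F_{AB}[q]\bigr] \;=\; q \qquad \text{for every pixel coordinate } q,
\]
since the $q$-th entry of $\texttt{backward}(F_{BA}, F_{AB})$ is, by~(5) applied with $F_{BA}$ playing the role of $A$, exactly $F_{BA}[F_{AB}[q]]$, while the $q$-th entry of $G$ is $q$ by definition of the coordinate grid. So the lemma is really a statement about composition of the two flow maps, not about image intensities.

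Next I would interpret the two flows at the level of individual pixels using their defining relations. From $B[q] = A[F_{AB}[q]]$ the entry $F_{AB}[q]$ is the coordinate in $A$ placed in correspondence with coordinate $q$ in $B$; symmetrically, from $A[p] = B[F_{BA}[p]]$ the entry $F_{BA}[p]$ sends a coordinate $p$ in $A$ to its corresponding coordinate in $B$. Setting $p := F_{AB}[q]$, the composition $F_{BA} \circ F_{AB}$ therefore chases the correspondence from $B$ into $A$ and then back, and the content of the lemma is that this round-trip returns the starting coordinate $q$.

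The main obstacle, and the only non-trivial point, is justifying this ``return''. Equation~(4) alone does not force the correspondence to be one-to-one: in photometrically ambiguous or occluded regions, several pixels of $A$ may satisfy the matching constraint for a given $q$, so $F_{BA}$ is not automatically a set-theoretic inverse of $F_{AB}$. To convert the informal chase into an equality I would invoke the standing assumption that the two flows encode the \emph{same} bijective geometric correspondence between the pixel grids of $A$ and $B$ — the idealized regime under which the section formulates its subsequent occlusion-mask construction (deviations from this identity will in fact be what the heuristic uses to flag occlusions). Under that assumption $F_{BA}$ is the set-theoretic inverse of $F_{AB}$, so $F_{BA}[F_{AB}[q]] = q$ for all $q$, and assembling these pointwise equalities into a single matrix identity yields $\texttt{backward}(F_{BA}, F_{AB}) = G$, completing the proof.
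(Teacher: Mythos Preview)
Your approach is correct and mirrors the paper's: both unfold the definition of \texttt{backward} to reduce the claim to the pointwise identity $F_{BA}[F_{AB}[q]] = q$ and then invoke the underlying pixel correspondence $p \leftrightarrow q$ to close the loop. The paper additionally treats the forward-flow case explicitly (the phrase ``of the same kind'' is meant to cover both backward and forward flows, and your use of the backward defining relation $B[q]=A[F_{AB}[q]]$ handles only the former), though the argument there is entirely analogous; your explicit articulation of the bijectivity assumption is, if anything, more careful than the paper, which simply posits the correspondence without comment.
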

\begin{proof}
We assume that the pixel $p$ of the image $A$ corresponds to the pixel $q$ of the image $B$ under the warping operation.
This implies the following equations:
\begin{align}
    B\left[q\right] &= A\left[p\right], \\
    \overset\leftarrow{F}_{AB}\left[q\right] &\stackrel{(\ref{eq:backward_flow})}{=} p, \label{eq:bfq}\\
    \overset\rightarrow{F}_{AB}\left[p\right] &\stackrel{(\ref{eq:forward_flow})}{=} q. \label{eq:ffp}
\end{align}
By a symmetry argument, we also obtain
\begin{align}
    \overset\leftarrow{F}_{BA}\left[p\right] &\stackrel{(\ref{eq:bfq})}{=} q, \label{eq:bfp}\\
    \overset\rightarrow{F}_{BA}\left[q\right] &\stackrel{(\ref{eq:ffp})}{=} p. \label{eq:ffq}
\end{align}
Let $X$ be the result of warping one backward flow with another, $$X =  \texttt{backward} \left(\overset\leftarrow{F}_{BA}, \overset\leftarrow{F}_{AB} \right).$$
From the definition,
\begin{align*}
    X \left[q\right] 
    \stackrel{(\ref{eq:backward_flow})}{=} \overset\leftarrow{F}_{BA} \left[ \overset\leftarrow{F}_{AB}\left[ q \right] \right]
    \stackrel{(\ref{eq:bfq})}{=} \overset\leftarrow{F}_{BA} \left[p\right]
    \stackrel{(\ref{eq:bfp})}{=} q,
\end{align*}
therefore, $X = G$.

The case of forward flow may be considered in the same way.
Denote the result of warping with $Y$, $$Y =  \texttt{backward} \left(\overset\rightarrow{F}_{BA}, \overset\rightarrow{F}_{AB} \right).$$
The value in the pixel $p$ gives us the following
\begin{align*}
    Y\left[ p \right] 
    \stackrel{(\ref{eq:backward_flow})}{=} \overset\rightarrow{F}_{BA} \left[ \overset\rightarrow{F}_{AB}\left[ p \right] \right]
    \stackrel{(\ref{eq:ffp})}{=} \overset\rightarrow{F}_{BA} \left[ q \right]
    \stackrel{(\ref{eq:ffq})}{=} p,
\end{align*}
which leads to $Y = G$.
\end{proof}

\subsection{Estimation of occlusion masks}
We employ the pretrained optical flow estimator~\cite{raft} and compute optical flows $\hat{F}_{rn}$ and $\hat{F}_{nr}$ between the reference view $I_r$ and ground-true novel view $I_n$.
According to the lemma~\ref{th:flows_consistency}, these flows should be cycle-consistent.
However, the views do not completely correspond to each other because of the presence of occluded regions.
Therefore, the result $\hat{G}$ of warping of one flow with another $$\hat{G} = \texttt{backward}\left(\hat{F}_{rn}, \hat{F}_{nr}\right)$$ does not result in the ``ideal'' coordinate grid. 

Based on this, we treat a pixel $p$ that $|\hat{G} \left[ p \right] - p| < \epsilon$ as \emph{non-occluded} because the optical flow estimator can find the corresponding pixel in another image. 
Otherwise, we include the pixel in the occlusion mask.
The threshold $\epsilon$ is set to the size of one pixel.
As a downside, the flow estimator is very sensitive to the image borders.
To overcome this issue, we use central crops that finally contain reasonable masks.

\begin{figure}[t]
    \centering
    \includegraphics[width=0.9\columnwidth]{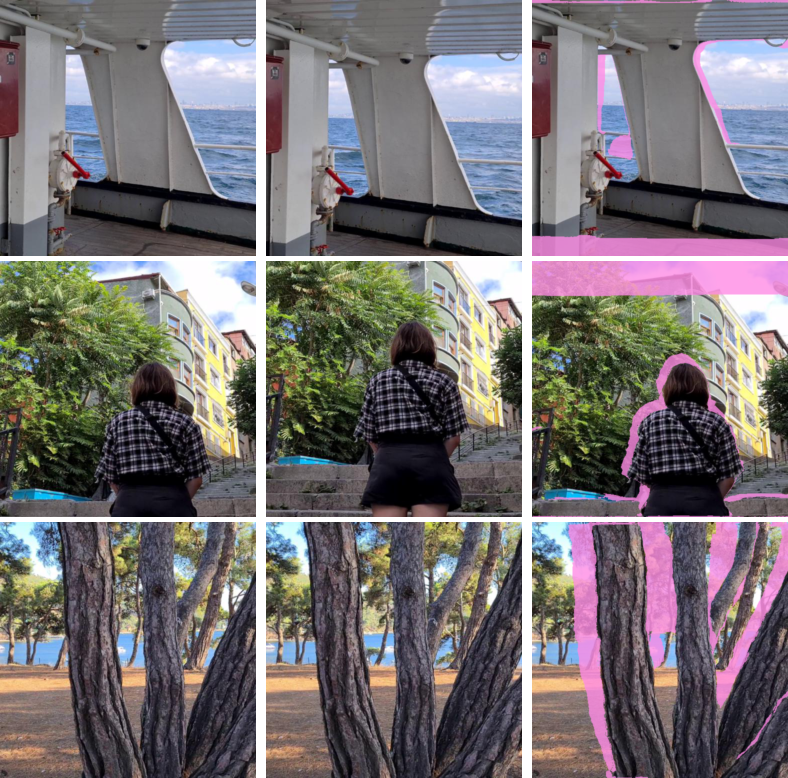}
    \caption{Occlusion masks, obtained with a pretrained optical flow estimator and our heuristic.
    \emph{Left:} reference images; \emph{middle:} generated novel views; \emph{right:} magenta masks indicate the parts of novel views that were occluded from the reference point of view.
    The area of such regions in SWORD is much greater than for RealEstate10k, justifying its usage.
    }
    \label{fig:raft_mask}
\end{figure}

{\small
\bibliographystyle{ieee_fullname}
\bibliography{refs}
}
\end{document}